
\documentclass[12pt,a4paper]{article}
\usepackage{amsmath,amssymb,amsthm}
\usepackage{epsfig}
\usepackage{graphics}
\usepackage{graphicx}

\title{Shape Characterization via Boundary Distortion}

\author{Xavier Descombes,\\
{\it INRIA SAM Sophia Antipolis, FR} \and
Serguei Komech \thanks{The work
is partially supported by RFBR grant 12-01-31294}\\
{\it Dobrushin Lab. IITP, Moscow, RU}\\
komech@iitp.ru
}






\newtheorem{definition}{Definition}
\newtheorem{proposition}{Proposition}

\newcommand{\be}{\begin{equation}}
\newcommand{\ee}{\end{equation}}

\newcommand{\ptb}{P^{\varepsilon}_{(\theta,\beta)}}

\begin{document}

\maketitle

\begin{abstract}
In this paper, we derive new shape descriptors based on a directional characterization. The main idea is to study the behavior of the shape neighborhood under
family of transformations.
We obtain a description invariant with respect to rotation, reflection, translation and scaling.
A well-defined metric is then proposed on the associated feature space. We show the continuity of this metric.
Some results on shape retrieval are provided on two databases to show the accuracy of the proposed shape metric.
\end{abstract}

\section{Introduction}
Shape characterization is becoming a crucial challenge in image analysis. The increasing resolution of new sensors, satellite images
or scanners provides information on the object geometry which can be interpreted by shape analysis.
 The size of data basis also requires some efficient tools  for analyzing shapes, for example in applications such as image retrieval.
 This task is not straightforward. If the goal of shape analyzing is to recognize a 3D object, for instance for classification
 or image retrieval purposes, then the data only consist of a 2D projection of the object. Therefore,
 one dimension is ``lost''. Besides, some  noise may affects the object boundary or more precisely the silhouette of the object
 in the considered image. To address this problem, numerous techniques and models have been proposed.
 Reviews of proposed representations can be found in~\cite{Loncaric98,Zhang04}. One class of methods consists in defining
 shapes descriptors based on shape signatures histogram signatures, shape invariant moments, contrast,
  matrices or spectral features. A shape representation is evaluated with respect to its robustness,
  w.r.t. noise and/or intra-class variability, compacity of the description, its invariance properties and its efficiency
  in terms of computation time. According to Zhang and Lu~\cite{Zhang04}, the different approaches can be classified
  into contour-based and region-based methods, and within each class between structural and global approaches.
  In this paper we consider shapes as binary silhouette of objects and concentrate on global approaches.
  We propose some feature vectors and define a metric in the feature space.  Following~\cite{Zhang04}, we can
  distinguish several global approaches. Simple global shape descriptors embed area, orientation, convexity, bending
  energy~\cite{Yong74,Peura97}. Usually, these descriptors are not sufficiently sensitive to details to provide good scores
  in image retrieval. Distances between shapes or surfaces have been proposed, such as the Hausdorf distance or some modification
  to reduce sensitivity to outlier~\cite{Rucklidge97,Belongie01}. In this setting, the invariance properties can be obtained by
  taking the minimum distance over the corresponding group of transformation. A key issue is to consider a metric for which
  the minimum is computed with a low computational complexity. Shape signatures based on the boundary give a 1D function as
  for example the angle function~\cite{Srivastava}, the curvature or the chord-length~\cite{Wang06}. Using these signatures,
  a slight change in the contour may result in a big change in the signature. Therefore, special care is required for defining
  a metric on the signature space. To reduce the dimension of the representation, boundary or surface moments can be used.
  They usually embed good invariance properties and are fast to compute. The geometric moments introduced in~\cite{Hu62}
  and extended, for example for 3D objects in~\cite{Xu08}, are limited in the complexity of shapes they can handle.
  Usually, lower order moments do not reflect enough information and higher order moments are difficult to estimate.
  Preferable alternatives are the algebric moments~\cite{Kaveti97} or the fourier descriptors~\cite{Zhang02}.
  Stochastic models of the shape or the coutour have also been proposed. For example, autoregressive models of the boundary
  provide some shape descriptors~\cite{Dubois86,He91}. However, the problem of choosing the order of the model is still open.
  Considering too many parameters leads to an estimation issue. moreover, the interpretation of parameters in terms of shape
  properties is not clear. Studying the shape at different scales as motivated different work. The shape is then described by
  its inflection points after a Gaussian filtering~\cite{Kopf05}. A distance can also be derived by matching scale space
  images~\cite{Daoudi00}. Finally, the analysis can be performed using spectral transforms, such as Fourier~\cite{Kunttu05,Capar06}
  or wavelet~\cite{Li04,Kong07} descriptors. The issues are then to set the number of relevant coefficients and the definition of a
  metric between these features.

In this paper, we derive a 2D signature of shapes and propose a metric on the associated feature space. The first idea consists
of a description of the boundary regularity by comparing the volume of the boundary neighborhood with the shape volume.
The second idea is to study the behavior of this descriptor under shape transformations. We thus define a family of
diffeomorphisms consisting in expanding the shape in one direction and contracting it in the orthogonal direction.
In that way, for a given detail, there exists at least such a transformation enlarging its contribution to the descriptor
and another one reducing it. We then derive a well defined metric on the feature space, and show its performance for shape
discrimination on databases of various size.

The paper is organized as follows. We describe the proposed shape space and define a metric on it in section~\ref{sec:topo}.
A discretization of the metric is described and evaluated on two different databases in section~\ref{sec:result}.
Finally, conclusion and perspectives are drawn in section~\ref{sec:conclusion}.

\section{A topological description of shapes}
\label{sec:topo}
We consider shapes as 2D silhouettes of bounded objects in the image plane:

\subsection{Shape space }
\begin{definition}
The pre-shape space $S$ is the set of subsets of
$\mathbb{R}^2$ satisfying the following conditions:
\begin{itemize}
\item[C1:] $\forall a \in S,\ a$ is compact and connected, with a strictly positive area,
\item[C2:] $\forall a \in S,\ \mathbb{R}^2 \setminus a$ is connected ($a$ has no hole).
\end{itemize}
\end{definition}

Let us consider a shape $a\in S$. Define the closed $\varepsilon$-neighbourhood
of the set $a$ in the sense of the Euclidean metric as
$O^{\varepsilon} (a) = \{x\in \mathbb{R}^2 : e(x,a)\leq \varepsilon \}, \varepsilon\geq 0$, where
$e(\cdot,\cdot)$ is the Euclidean distance.

On this pre-shape space, we consider the Hausdorff metric (which is well-defined, see, for example, \cite{Serra}) for the sets in $\mathbb{R}^2$:
$$
\rho(a,b)=\inf\{\delta>0 : a\subset O^{\delta}(b), b\subset O^{\delta}(a) \},
$$
where $a,b \subset S$.

A shape space should embed  some invariance properties. Let $G$ be the group of transformations of $\mathbb{R}^2$ generated
by rotations, translations, reflections and scaling : $G = SO^{\pm}_2(\mathbb{R})\times \mathbb{R}_{+}$. To define a shape space $\mathbb{S}$
isometry- and scale-invariant, we consider:
\begin{equation}
\mathbb{S}= S / G.
\end{equation}
For a given $A \in \mathbb{S}$, we note $r(A) = \{ a  \in S : vol(a)=1, G(a) = A \}$, where vol($\cdot$) is the area of the set.

Therefore, on the shape space $\mathbb{S}$, the Hausdorff metric becomes:

\begin{equation}
d(A, B)=\inf \{\rho(a,b)\ |\ a\in r(A), b\in r(B)\}
\end{equation}
where $A,B\in \mathbb{S}$ (note that this metric can be compared with the Procrustes distance for sets consisting of finite number of points~\cite{Ke,DM}).

\begin{proposition}
$d(\cdot,\cdot)$ is a well-defined metric on $\mathbb{S}$.
\end{proposition}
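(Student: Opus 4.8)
The plan is to verify the three defining properties of a metric for $d(\cdot,\cdot)$ on the quotient space $\mathbb{S}$, while being careful about the fact that $d$ is defined as an infimum over representatives. First I would establish that $d$ is well-defined, i.e.\ finite and independent of the choice of representatives. Finiteness is immediate since each class $r(A)$ is nonempty (every shape can be rescaled to unit area by condition C1, which guarantees strictly positive finite area) and $\rho$ is finite on compact sets. Representative-independence is built into the definition, since $d(A,B)$ is an infimum taken over \emph{all} $a\in r(A)$ and $b\in r(B)$.

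Next I would check nonnegativity and symmetry, which are inherited directly from the corresponding properties of the Hausdorff metric $\rho$: each $\rho(a,b)\geq 0$ so the infimum is $\geq 0$, and the symmetry $\rho(a,b)=\rho(b,a)$ gives $d(A,B)=d(B,A)$ immediately. The triangle inequality is the step requiring genuine care. Given $A,B,C\in\mathbb{S}$ and any $\eta>0$, I would choose near-optimal representatives $a\in r(A)$, $b\in r(B)$ realizing $\rho(a,b)<d(A,B)+\eta$, and $b'\in r(B)$, $c\in r(C)$ realizing $\rho(b',c)<d(B,C)+\eta$. The obstacle is that the representative of $B$ chosen for the first pair need not coincide with the one chosen for the second. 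The key device is that $b$ and $b'$ lie in the same $G$-orbit, so there is a transformation $g\in G$ with $b'=g(b)$; applying $g$ to both $a$ and $b$ preserves $\rho$ because the elements of $G$ (rotations, reflections, translations, and scalings combined with the unit-area normalization) act as similarities, and one must confirm that the unit-area constraint is respected. After aligning the $B$-representative via $g$, the ordinary triangle inequality for $\rho$ and the infimum definition of $d$ yield $d(A,C)\leq d(A,B)+d(B,C)+2\eta$, and letting $\eta\to 0$ closes the argument.

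The most delicate property is the identity of indiscernibles: $d(A,B)=0 \iff A=B$. The direction $A=B\Rightarrow d(A,B)=0$ is trivial since one may take $a=b$. For the converse I would suppose $d(A,B)=0$ and extract sequences $a_n\in r(A)$, $b_n\in r(B)$ with $\rho(a_n,b_n)\to 0$. The main obstacle here is to promote this vanishing Hausdorff distance, which holds only along some sequence of representatives, into an actual equality of orbits $A=B$. I anticipate this requires a compactness argument: after normalizing to unit area, the representatives are uniformly bounded (by C1 and the area constraint), and one would use compactness of the relevant subgroup action together with the completeness of the space of compact sets under $\rho$ (Blaschke-type selection) to pass to a convergent subsequence $a_{n_k}\to a_*$, with $a_*\in r(A)$ by closedness of the orbit under the normalization. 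Then $\rho(a_{n_k},b_{n_k})\to 0$ forces $b_{n_k}\to a_*$ as well, placing $a_*$ in the closure of $r(B)$; verifying that $r(B)$ is closed (so $a_*\in r(B)$) then gives $a_*\in r(A)\cap r(B)$, hence $A=B$. The crux is thus establishing that orbits are closed and that the normalized representatives form a $\rho$-compact family, after which the metric axioms follow cleanly.
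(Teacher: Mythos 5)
Your treatment of the triangle inequality is essentially the paper's own argument: choose near-optimal representatives for the pairs $(A,B)$ and $(B,C)$, observe that the two $B$-representatives differ by some $g\in G$ whose scaling part must be trivial (both have unit area), so $g$ is an isometry, and then transport one pair by $g$ so that the ordinary triangle inequality for $\rho$ applies. The paper runs this as a contradiction with $\delta/4$ margins and applies $g$ to $c$ rather than to $a$, but these are cosmetic differences; your explicit flag that ``the unit-area constraint must be respected'' is exactly the point the paper uses silently when it writes $\rho(b_2,c)=\rho(b_1,g(c))$.

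Where you genuinely depart from the paper is the implication $d(A,B)=0\Rightarrow A=B$: the paper dismisses it as straightforward ``due to the compactness of the considered sets,'' whereas you attempt an actual proof. Your outline is the right one, but it contains a false sub-claim: the unit-area representatives in $r(A)$ are \emph{not} uniformly bounded. The orbit $r(A)$ is invariant under all translations, so its elements cannot lie in a common ball, and Blaschke selection cannot be applied to $a_n$ directly. (Note also that unit area by itself does not even bound the diameter of a single representative; what bounds diameters here is that all elements of $r(A)$ are congruent to one another.) The repair is standard and preserves your architecture: since any $g\in G$ carrying one unit-area set to another is an isometry, write $a_n=g_n(a_1)$ with $g_n$ an isometry, and replace the pair $(a_n,b_n)$ by $\bigl(a_1,\,g_n^{-1}(b_n)\bigr)$; this leaves $\rho$ unchanged and keeps $g_n^{-1}(b_n)\in r(B)$. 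Now the $A$-representative is pinned, $g_n^{-1}(b_n)\to a_1$ in the Hausdorff metric, and your closed-orbit argument works: the isometries relating $g_n^{-1}(b_n)$ to a fixed $b_1\in r(B)$ have bounded translation parts and rotation/reflection parts in the compact group $O(2)$, so a subsequence converges, exhibiting $a_1$ as an isometric image of $b_1$. Hence $a_1\in r(A)\cap r(B)$ and $A=B$. This route has the additional virtue, which you should make explicit, of never invoking continuity of area under Hausdorff limits --- which fails in general and is the hidden trap in any naive ``$r(B)$ is closed'' claim.
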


\begin{proof}

Let us consider $A,B,C \in \mathbb{S}$.
It is straightforward that $d(A,B)=d(B,A)$ and $d(A,B)=0 \Leftrightarrow r(A)=r(B)$ due to the compactness of the considered sets.
Then, we only have to check the following property: $d(A,C)\leq d(A,B)+d(B,C)$.
Suppose that there exist $A,B,C$ such that
\begin{equation}
d(A,C)>d(A,B)+d(B,C).
\end{equation}
Let $\delta:=d(A,C)-d(A,B)-d(B,C)>0$. By definition there exist $a\in~r(A),\\ b_1,b_2\in r(B),\ c\in r(C)$
such that
\begin{equation}\label{contra}
 \begin{matrix}
(p_1)~ \rho(a,b_1)<d(A,B)+\delta/4,\\
(p_2)~ \rho(b_2,c)<d(B,C)+\delta/4.
 \end{matrix}
\end{equation}
Then $\exists g \in G : b_1 = g(b_2)$ so that $(p_3) \rho(b_2,c) = \rho(b_1,g(c))$ and $g(c) \in r(C)$.
Therefore, we have $c_1 = g(c) \in r(C),\ \rho(b_2,c)=\rho(b_1,c_1)$.

We have:

\begin{equation}
 (p_1) \Rightarrow  b_1 \subset O^{d(A,B) + \delta/4}(a),~~ (p_2+p_3) \Rightarrow  c_1 \subset O^{d(B,C) + \delta/4}(b_1),
\end{equation}

and:

\begin{equation}
 (p_1) \Rightarrow  a \subset O^{d(A,B) + \delta/4}(b_1),~~ (p_2+p_3) \Rightarrow  b_1 \subset O^{d(B,C) + \delta/4}(c_1).
\end{equation}
Therefore:

\begin{equation}
\begin{matrix}
c_1\subset O^{d(B,C)+\delta/4+d(A,B)+\delta/4}(a),\\
a\subset O^{d(A,B)+\delta/4+d(B,C)+\delta/4}(c_1).
\end{matrix}
\end{equation}
Hence,
\be
d(A,C)\leq \rho(a,c_1)\leq d(A,B)+d(B,C)+\delta/2.
\ee
This contradiction ends the proof.
\end{proof}

\subsection{ Volume descriptor and family of transformations}

The main idea of the proposed description is to characterize the behavior of shapes under some transformations. These transformations aim at
enlighting small characteristic details. We first consider the volume behavior under some dilation. Intuitively, this volume will increase more
for sinuous shape boundaries than for smooth shapes.

Let us consider a shape $a\in S$. Idea of our shape descriptor is based on analyzing the fraction
\begin{equation}
\label{n}
P^{\varepsilon}(a) = \frac {vol(O^{\varepsilon}(a) \setminus a)} {vol(a)} \quad ,
\end{equation}
where vol($\cdot$) is the area of the set (for $a\subset\mathbb{Z}^2$,
it would be the number of pixels). This parameter is well-defined
as we only consider nonzero area set $a$.

Basically, the proposed feature study the evolution of the ratio between the neighborhood volume and the volume of the shape after some dilation. It provides some information on the smoothness of the contour and on the size of the contour concavities.
To complete the description, the next step consists in enlarging details in shapes for a more robust discrimination.
Besides, we consider a directional analysis by defining family of transformations parametrized by an angle and coefficient of expansion.

We consider a family $\{F_{(\theta, \beta)}\}$ of linear transformations of $\mathbb{R}^2$ in order to obtain more significant information about shapes.
The goal of such transformations is to emphasize "features" of the shape in specific direction.
These transformations are defined as follows:

\begin{equation}
F_{(\theta, \beta)} : \begin{pmatrix}
\beta \cos^2\theta +  \frac{1}{\beta}\sin^2\theta & (\beta-\frac{1}{\beta})\sin\theta\cos\theta  \\
(\beta-\frac{1}{\beta})\sin\theta\cos\theta & \beta \sin^2\theta +
\frac{1}{\beta}\cos^2\theta
\end{pmatrix},
\end{equation}
where $\theta\in [-\frac{\pi}{2}, \frac{\pi}{2}], \beta\geq1$. Every $F_{(\theta,\beta)}$ is $\beta$-times expanding in one direction
and $\beta$-times contracting in orthogonal, so it is a volume-preserving transformation.\\

For every set $a\subset \mathbb{R}^2$, we obtain the map
\begin{equation}
a \rightarrow P^{\varepsilon}_{(\theta,\beta)}(a):=\frac{vol(O^{\varepsilon}(F_{(\theta,\beta)}a)\setminus F_{(\theta,\beta)}a)}{vol(a)},\quad \theta\in [-\frac{\pi}{2}, \frac{\pi}{2}],\ \beta\geq 1,\ \varepsilon>0.
\end{equation}
It is clear that $P^{\varepsilon}_{(\theta,\beta)}(a)$ is a continuous function of $\varepsilon,\theta$ and $\beta$
and $P^{\varepsilon}_{(-\frac{\pi}{2},\beta)}(a)=P^{\varepsilon}_{(\frac{\pi}{2},\beta)}(a)$.

On figure~\ref{fig:hand01} is shown a hand shape $a$ for which $P^n_{(\theta,1)}(a) = 0.23$. After transformations, we obtain respectively  $P^n_{(0,2)}(a) = 0.33$
and $P^n_{(90,2)}(a) = 0.20$. Therefore, when expanding the shape in the finger direction, the descriptor increases while slightly decreasing when contracting along this direction.

\begin{figure}
  \center
  \includegraphics[height=3cm]{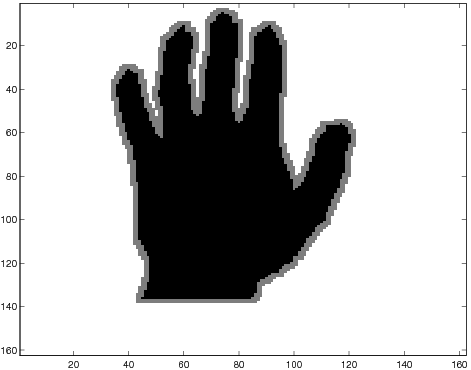}
  \includegraphics[height=3cm]{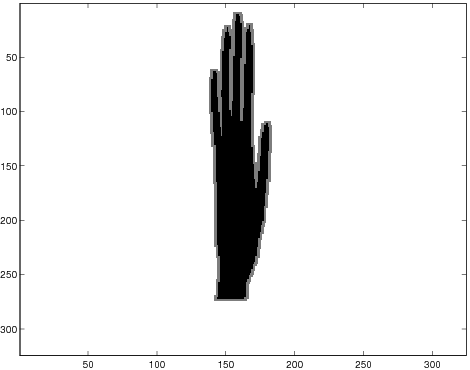}
 \includegraphics[height=3cm]{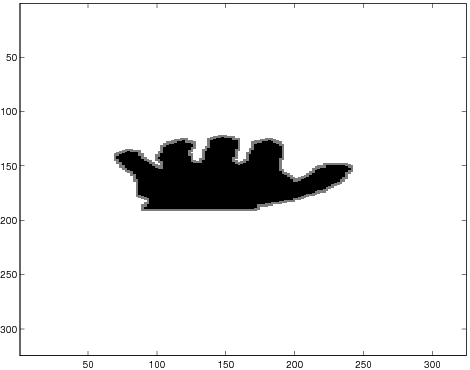}
  \caption{Hand (left), after transformation $F_{(\pi/2,2)}$ (middle), and after transformation $F_{(0,2)}$}\label{fig:hand01}
\end{figure}

Consider $R_{\gamma}$, the rotation by an angle $\gamma$. We have the following property:

\begin{proposition}
\label{Rot}
$\forall \theta \in [-\frac{\pi}{2},\frac{\pi}{2}], \ptb(R_{\gamma}a) = P^{\varepsilon}_{((\theta+\frac{\pi}{2}+\gamma)mod(\pi) - \frac{\pi}{2}, \beta)}(a)$
\end{proposition}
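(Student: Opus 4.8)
The plan is to exploit the spectral structure of $F_{(\theta,\beta)}$ together with the rotation invariance of the Euclidean neighbourhood operator $O^{\varepsilon}(\cdot)$. Writing $R_{\theta}$ for the planar rotation by $\theta$ and $D_{\beta}=\mathrm{diag}(\beta,\tfrac1\beta)$, a direct multiplication of the three matrices shows that
\[
F_{(\theta,\beta)} = R_{\theta}\,D_{\beta}\,R_{-\theta},
\]
i.e. $F_{(\theta,\beta)}$ is the axis-aligned stretch $D_{\beta}$ written in the basis rotated by $\theta$ (it dilates by $\beta$ along the direction $\theta$ and contracts along the orthogonal one). Since the rotations form a commutative group, $R_{\gamma}R_{\theta}=R_{\theta+\gamma}$, this decomposition yields the conjugation identity $R_{\gamma}F_{(\theta,\beta)}R_{-\gamma}=F_{(\theta+\gamma,\beta)}$. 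Rearranging it lets me commute $F_{(\theta,\beta)}$ past a rotation,
\[
F_{(\theta,\beta)}\,R_{\gamma} = R_{\gamma}\,F_{(\theta+\gamma,\beta)},
\]
the sign of the angular shift being the only place where the orientation convention adopted for $R_{\gamma}$ enters.

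Next I would record the invariance that makes the leftover rotation harmless. For any rotation $Q\in SO_{2}(\mathbb{R})$ and any set $b$, the Euclidean distance satisfies $e(Qx,Qb)=e(x,b)$, hence $O^{\varepsilon}(Qb)=Q\,O^{\varepsilon}(b)$ and so $O^{\varepsilon}(Qb)\setminus Qb = Q\bigl(O^{\varepsilon}(b)\setminus b\bigr)$. As $Q$ preserves area, $vol\bigl(O^{\varepsilon}(Qb)\setminus Qb\bigr)=vol\bigl(O^{\varepsilon}(b)\setminus b\bigr)$; likewise $vol(R_{\gamma}a)=vol(a)$.

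Combining the two ingredients is then mechanical. Starting from the definition,
\[
\ptb(R_{\gamma}a)=\frac{vol\bigl(O^{\varepsilon}(F_{(\theta,\beta)}R_{\gamma}a)\setminus F_{(\theta,\beta)}R_{\gamma}a\bigr)}{vol(R_{\gamma}a)},
\]
I substitute $F_{(\theta,\beta)}R_{\gamma}a=R_{\gamma}\,F_{(\theta+\gamma,\beta)}a$, apply the displayed invariance with $Q=R_{\gamma}$ to erase the outer rotation in the numerator, and use $vol(R_{\gamma}a)=vol(a)$ in the denominator. This gives $P^{\varepsilon}_{(\theta+\gamma,\beta)}(a)$.

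Finally, since $\theta$ enters $F_{(\theta,\beta)}$ only through $\cos^{2}\theta$, $\sin^{2}\theta$ and $\sin\theta\cos\theta$, one has $F_{(\theta+\pi,\beta)}=F_{(\theta,\beta)}$, so the descriptor is $\pi$-periodic in its angular argument. Hence $\theta+\gamma$ may be replaced by its unique representative in $[-\tfrac\pi2,\tfrac\pi2)$, which is exactly $(\theta+\tfrac\pi2+\gamma)\bmod\pi-\tfrac\pi2$, yielding the stated formula. I do not expect a genuine obstacle here: the whole content is the matrix identity $F_{(\theta,\beta)}=R_{\theta}D_{\beta}R_{-\theta}$ and the equivariance $O^{\varepsilon}(Qb)=Q\,O^{\varepsilon}(b)$. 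The only points demanding care are fixing the orientation sign of $R_{\gamma}$ and carrying out the $\bmod\,\pi$ reduction so that the reduced angle lands in the prescribed interval $[-\tfrac\pi2,\tfrac\pi2]$.
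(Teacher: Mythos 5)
The paper states this proposition bare, with no proof at all, so there is no paper argument to compare yours against; your write-up supplies the missing proof, and its substance is correct. The decomposition $F_{(\theta,\beta)} = R_{\theta} D_{\beta} R_{-\theta}$ checks out by direct multiplication (it is also the cleanest way to see that $\theta \mapsto F_{(\theta,\beta)}$ is rotation-equivariant), the equivariance $O^{\varepsilon}(Qb) = Q\,O^{\varepsilon}(b)$ for an isometry $Q$ together with area invariance of rotations is exactly what kills the leftover rotation in both numerator and denominator of $\ptb$, and the periodicity $F_{(\theta+\pi,\beta)} = F_{(\theta,\beta)}$ justifies the reduction of the angle to $(\theta+\frac{\pi}{2}+\gamma)\bmod \pi - \frac{\pi}{2}$. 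This is the natural proof of the statement.

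One algebraic slip you should repair: your two displayed commutation identities contradict each other under any single fixed convention. From the (correct) conjugation identity $R_{\gamma} F_{(\theta,\beta)} R_{-\gamma} = F_{(\theta+\gamma,\beta)}$, replacing $\theta$ by $\theta-\gamma$ and rearranging gives
\begin{equation*}
F_{(\theta,\beta)}\,R_{\gamma} \;=\; R_{\gamma}\,F_{(\theta-\gamma,\beta)},
\end{equation*}
with a minus sign, not the plus sign you wrote. Carried through, your computation then yields $\ptb(R_{\gamma}a) = P^{\varepsilon}_{(\theta-\gamma,\beta)}(a)$ (angle reduced mod $\pi$) when $R_{\gamma}$ denotes the counterclockwise rotation by $\gamma$; the paper's stated $+\gamma$ corresponds to the opposite orientation convention for $R_{\gamma}$, which the paper never fixes. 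Since $\gamma$ is arbitrary, the two versions of the statement are equivalent under the relabeling $\gamma \mapsto -\gamma$, so this does not affect the content of the proposition — you flagged the convention dependence yourself. But as written the proof is internally inconsistent: state the orientation convention once, and then use the matching sign in both the conjugation identity and its rearrangement.
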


Consider $R_x$, the reflection with respect to $x$ axis (horizontal line).  We have the following property:

\begin{proposition}
\label{Ref}
$\forall \theta \in [-\frac{\pi}{2},\frac{\pi}{2}], \ptb(R_x a) = P^{\varepsilon}_{(-\theta, \beta)}(a)$
\end{proposition}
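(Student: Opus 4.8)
The plan is to reduce the statement to a single algebraic identity on the matrices $F_{(\theta,\beta)}$, after which the geometric content follows from the invariance of the relevant volumes under isometries. Write $R_x = \mathrm{diag}(1,-1)$ for the reflection across the horizontal axis. The starting observation is that conjugating any $2\times 2$ matrix by $R_x$ leaves the diagonal entries unchanged and negates the two off-diagonal entries.

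First I would establish the key identity $R_x F_{(\theta,\beta)} R_x = F_{(-\theta,\beta)}$. Inspecting the matrix $F_{(\theta,\beta)}$, its diagonal entries depend on $\theta$ only through $\cos^2\theta$ and $\sin^2\theta$ and are therefore even in $\theta$, while its off-diagonal entries are proportional to $\sin\theta\cos\theta$, which is odd in $\theta$. Hence replacing $\theta$ by $-\theta$ leaves the diagonal fixed and flips the sign of the off-diagonal entries, which is exactly the effect of conjugation by $R_x$. Equivalently, writing $F_{(\theta,\beta)} = R_\theta D_\beta R_{-\theta}$ with $D_\beta = \mathrm{diag}(\beta,1/\beta)$, one uses that $R_x$ commutes with $D_\beta$ and satisfies $R_x R_\theta = R_{-\theta} R_x$. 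Multiplying the identity on the left by $R_x$ and using $R_x^2 = I$ gives the form I actually want, $F_{(\theta,\beta)} R_x = R_x F_{(-\theta,\beta)}$.

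Next I would apply this to the set. Putting $b := F_{(-\theta,\beta)} a$, the identity yields $F_{(\theta,\beta)}(R_x a) = R_x b$. Since $R_x$ is an isometry it preserves Euclidean distances, so the $\varepsilon$-neighborhood operation commutes with it, $O^\varepsilon(R_x b) = R_x\, O^\varepsilon(b)$, and therefore $O^\varepsilon(R_x b)\setminus R_x b = R_x\bigl(O^\varepsilon(b)\setminus b\bigr)$. Because $R_x$ also preserves area, the numerator of $\ptb(R_x a)$ equals $vol\bigl(O^\varepsilon(b)\setminus b\bigr) = vol\bigl(O^\varepsilon(F_{(-\theta,\beta)}a)\setminus F_{(-\theta,\beta)}a\bigr)$. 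The denominator is unchanged for the same reason, $vol(R_x a) = vol(a)$. Dividing gives $\ptb(R_x a) = P^{\varepsilon}_{(-\theta,\beta)}(a)$, which is the claim; note that $\theta \in [-\tfrac{\pi}{2},\tfrac{\pi}{2}]$ forces $-\theta$ into the same interval, and the endpoint cases are consistent with the already observed identity $P^\varepsilon_{(-\pi/2,\beta)}(a)=P^\varepsilon_{(\pi/2,\beta)}(a)$.

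The only real work is the matrix identity $R_x F_{(\theta,\beta)} R_x = F_{(-\theta,\beta)}$; I expect this to be the main (and essentially the only) obstacle, and it is dispatched by the parity observation above. Everything after it is the routine transfer of the statement through an isometry, of exactly the same flavor as the manipulations used for the rotation case in Proposition~\ref{Rot}.
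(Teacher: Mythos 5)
Your proof is correct and complete: the conjugation identity $R_x F_{(\theta,\beta)} R_x = F_{(-\theta,\beta)}$ (equivalently $F_{(\theta,\beta)}R_x = R_x F_{(-\theta,\beta)}$) is exactly the right reduction, and the transfer through the isometry $R_x$ — commutation with $O^{\varepsilon}(\cdot)$ and invariance of area — is carried out without gaps. For comparison, the paper states Proposition~\ref{Ref} with no proof at all (it is presented as an evident property used only to generate the group $R$), so your write-up supplies precisely the routine verification the authors left implicit, and by the same mechanism as would be needed for Proposition~\ref{Rot}.
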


Let us denote $R$ the group of transformations generated by properties~\ref{Rot} and \ref{Ref}. The shape representation space $\mathcal{R}$ we consider for the fixed $\varepsilon>0$ is then defined by the following mapping:

\begin{eqnarray}
\Phi : \mathbb{S} & \rightarrow & \mathcal{R} = C^0([-\frac{\pi}{2}, \frac{\pi}{2}]\times [1,\infty])/R \nonumber \\
A & \mapsto & \ptb(a)/R, \quad a\in r(A).
\end{eqnarray}

On figure~\ref{fig:calf2}, we can remark that the function $P^n_{(\gamma,\beta)}(a)$ increases more in two directions, corresponding to an extention of both the calf body and its legs.
On figure~\ref{fig:birdfight1}, the biggest slope is obtained when expanding the wings of the birdfight.

\begin{figure}
  \center
  \includegraphics[height=4.5cm]{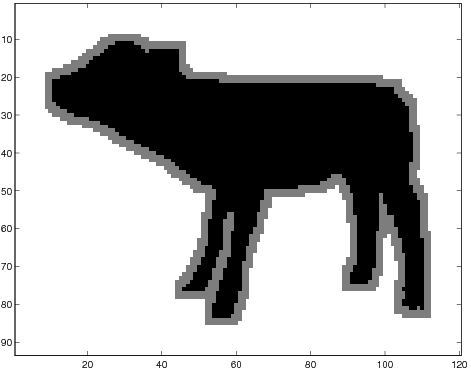}
  \includegraphics[height=4.5cm]{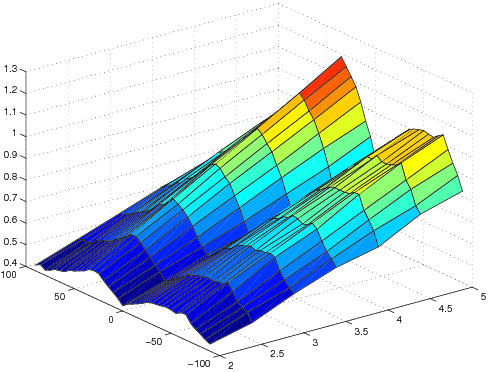} \\
 \includegraphics[height=2.2cm]{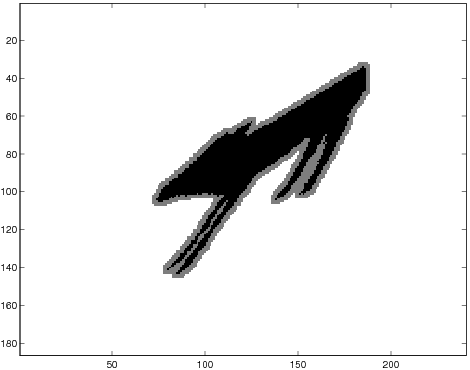}
 \includegraphics[height=2.2cm]{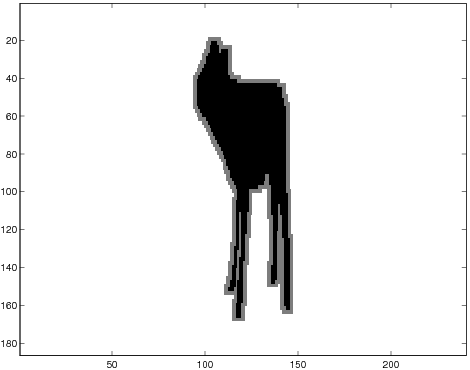}
 \includegraphics[height=2.2cm]{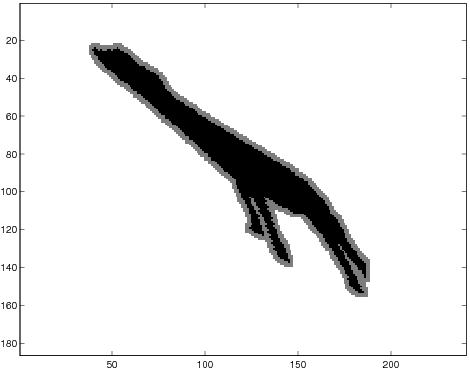}
 \includegraphics[height=2.2cm]{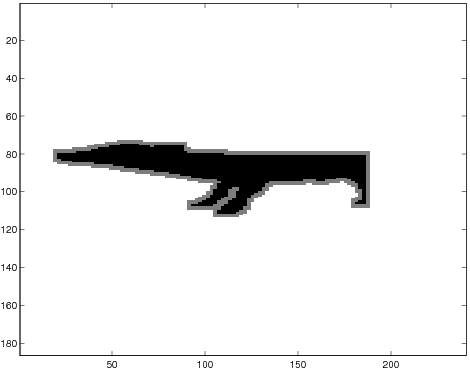}
  \caption{Calf (top left) and the associated representation $P_n(F_{(\gamma,\beta)})$.
The bottom line represents $F_{(\gamma,\beta)}$ for $\beta=2$ and $\gamma = \pi/4,\pi/2,-\pi/4,0$ (the shape is in black and the neighborhood in grey).}\label{fig:calf2}
\end{figure}

\begin{figure}
  \center
  \includegraphics[height=4.5cm]{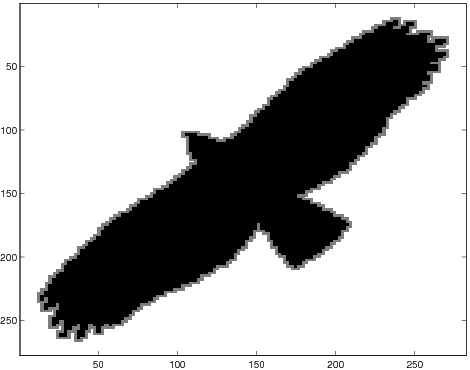}
  \includegraphics[height=4.5cm]{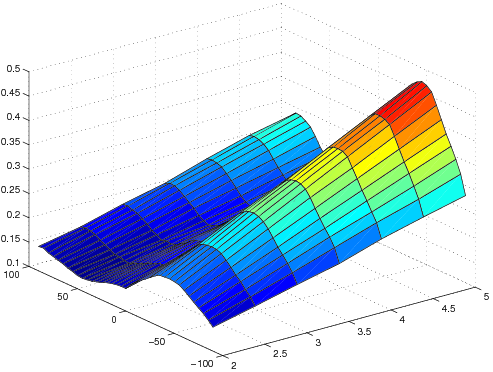} \\
 \includegraphics[height=2.2cm]{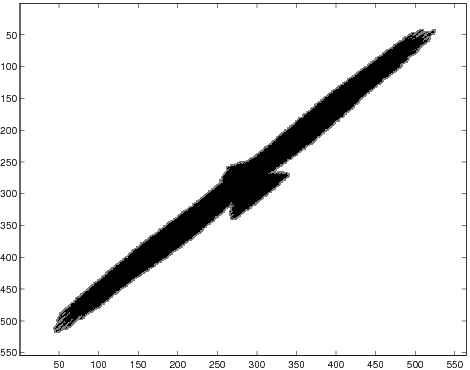}
 \includegraphics[height=2.2cm]{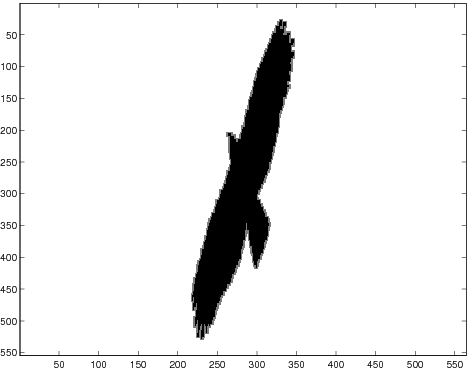}
 \includegraphics[height=2.2cm]{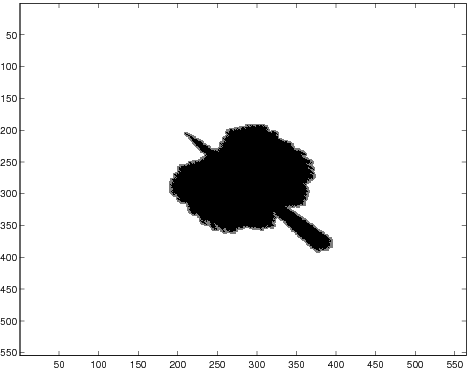}
 \includegraphics[height=2.2cm]{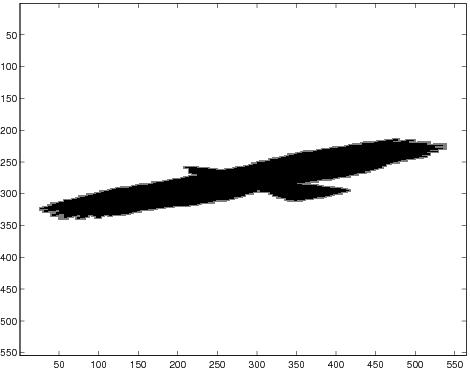}
  \caption{Birdfight (top left) and the associated representation $P_n(F_{(\gamma,\beta)})$.
The bottom line represents $F_{(\gamma,\beta)}$ for $\beta=2$ and $\gamma = \pi/4,\pi/2,-\pi/4,0$ (the shape is in black and the neighborhood in grey).}\label{fig:birdfight1}
\end{figure}

We then consider the following metric on $\mathcal{R}$:

\begin{equation}
\label{eq:themetric}
l(\Phi(A), \Phi(B)):= \inf_{a,b}\left(\int\limits_{[-\frac{\pi}{2}, \frac{\pi}{2}]\times [1,\infty]} (\ptb(a) -\ptb(b))^2 \ e^{-\kappa\beta} d\beta \ d\theta \right)^{1/2},
\end{equation}
where $ \kappa>0,~ a\in r(A),~ b\in r(B)$. The integral on the right-hand side converges due to $\ptb(a)$ is almost linear function of $\beta$ for $\beta$ big enough.

We thus have defined a map between the shape space and the feature space. Two similar shapes should be associated to close points in the feature space.
This property can be established by the continuity of mapping $\Phi$ with respect to the metrics defined in both spaces.

\begin{proposition}
$\Phi: (\mathbb{S},d(,))\to (\mathcal{R},l(,))$ is a continuous map.
\end{proposition}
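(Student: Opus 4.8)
The plan is to prove sequential continuity: fix $A\in\mathbb{S}$ and a sequence $B_n\to A$ (i.e. $d(A,B_n)\to 0$), and show $l(\Phi(A),\Phi(B_n))\to 0$. First I would fix one representative $a\in r(A)$. Since the Hausdorff metric $\rho$ is invariant under isometries, and every element of $r(A)$ is an isometric image of $a$ (the scaling being pinned down by $vol=1$), one has $d(A,B_n)=\inf_{b\in r(B_n)}\rho(a,b)$; hence I can choose $b_n\in r(B_n)$ with $\rho(a,b_n)\to 0$. Because $l$ is defined as an infimum over representatives, it is enough to bound a single integral,
$$ l(\Phi(A),\Phi(B_n))^2 \le \int_{[-\frac{\pi}{2},\frac{\pi}{2}]\times[1,\infty]} \big(\ptb(a)-\ptb(b_n)\big)^2 e^{-\kappa\beta}\,d\beta\,d\theta, $$
and to show the right-hand side tends to $0$.

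For pointwise convergence of the integrand, fix $(\theta,\beta)$ and set $a'=F_{(\theta,\beta)}a$, $b_n'=F_{(\theta,\beta)}b_n$. Since $F_{(\theta,\beta)}$ has operator norm $\beta$, it expands distances by at most $\beta$, so $\rho(a',b_n')\le\beta\,\rho(a,b_n)=:\delta_n'\to 0$. Using $vol(a)=vol(b_n)=1$ and the area-preservation of $F_{(\theta,\beta)}$, one has $\ptb(a)=vol(O^\varepsilon(a'))-1$ and likewise for $b_n$, so only neighborhood volumes must be compared. From $a'\subset O^{\delta_n'}(b_n')$ and $b_n'\subset O^{\delta_n'}(a')$ I obtain the inclusions $O^{\varepsilon-\delta_n'}(a')\subset O^\varepsilon(b_n')\subset O^{\varepsilon+\delta_n'}(a')$ (valid once $\varepsilon>\delta_n'$), which place both $vol(O^\varepsilon(b_n'))$ and $vol(O^\varepsilon(a'))$ inside $[\,vol(O^{\varepsilon-\delta_n'}(a')),\,vol(O^{\varepsilon+\delta_n'}(a'))\,]$. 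Hence
$$ \big|\ptb(a)-\ptb(b_n)\big| \le vol\big(O^{\varepsilon+\delta_n'}(a')\big)-vol\big(O^{\varepsilon-\delta_n'}(a')\big). $$
The right-hand side now involves only the \emph{fixed} set $a'$, and I would let it go to $0$ via continuity of $t\mapsto vol(O^t(a'))$: the limit from above is monotone convergence of measures, and the limit from below uses that the level set $\{e(\cdot,a')=\varepsilon\}$ is Lebesgue-null (the distance function is $1$-Lipschitz with $|\nabla e|=1$ a.e. where positive, so no level set can carry positive area). This yields $\ptb(b_n)\to\ptb(a)$ for every $(\theta,\beta)$.

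To pass from pointwise convergence to convergence of the integral I would exhibit an integrable dominating function. Translating so that $a\subset B(0,\Lambda)$, the set $F_{(\theta,\beta)}a$ lies in an ellipse with semi-axes $\beta\Lambda$ and $\Lambda/\beta$, so $vol(O^\varepsilon(F_{(\theta,\beta)}a))\le C_a\,\beta$ for all $\beta\ge 1$ and all $\theta$ — precisely the ``almost linear in $\beta$'' behaviour already noted in the text. As $b_n\to a$, for large $n$ all $b_n$ sit in one common ball, so the same bound holds for them with a uniform constant. Therefore $(\ptb(a)-\ptb(b_n))^2 e^{-\kappa\beta}\le C\,\beta^2 e^{-\kappa\beta}$, which is integrable on $[-\frac{\pi}{2},\frac{\pi}{2}]\times[1,\infty]$, and dominated convergence forces the integral to $0$, finishing the argument.

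The main obstacle is the set-dependent continuity of the neighborhood-volume functional: since $b_n$ moves with $n$, one cannot simply invoke continuity of $t\mapsto vol(O^t(\cdot))$ for a fixed set. The key device is the two-sided inclusion that re-expresses the entire discrepancy through neighborhoods of the single fixed set $a'$; once that is in place, only elementary continuity of measure together with the null-level-set fact is required, while the operator-norm estimate and the linear-growth bound for domination are routine.
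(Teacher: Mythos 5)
Your proof is correct, and it rests on the same geometric ingredients as the paper's: reduction to a single pair of representatives via isometry invariance of $\rho$, the identity $\ptb(a)=vol(O^{\varepsilon}(F_{(\theta,\beta)}a))-1$ coming from volume preservation, the estimate $\rho(F_{(\theta,\beta)}a,F_{(\theta,\beta)}b)\le\beta\,\rho(a,b)$ (operator norm $\beta$), and the linear-in-$\beta$ growth bound for large $\beta$. The analytic packaging, however, is genuinely different. The paper runs an $\varepsilon$--$\delta$ argument: it splits the integral at a threshold $\beta_0$, makes the tail small uniformly via the growth bound, and on the compact rectangle $[-\frac{\pi}{2},\frac{\pi}{2}]\times[1,\beta_0]$ obtains a single $\delta_0$ by asserting continuity of a function $\delta(\theta,\beta)$ and invoking compactness. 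You instead prove sequential continuity: pointwise convergence of the integrand at each fixed $(\theta,\beta)$, a dominating function $C\beta^2e^{-\kappa\beta}$, and the dominated convergence theorem; this bypasses the uniformity-over-compacts step entirely (the step the paper dismisses as ``easy to verify''). A second real difference lies in how the volume discrepancy is controlled. The paper uses the one-sided bound $\left|\ptb(b)-\ptb(a)\right|\le vol(O^{\varepsilon+\beta\delta}(F_{(\theta,\beta)}a))-vol(O^{\varepsilon}(F_{(\theta,\beta)}a))$, which as written justifies only the case $vol(O^{\varepsilon}(F_{(\theta,\beta)}b))\ge vol(O^{\varepsilon}(F_{(\theta,\beta)}a))$; the opposite sign naturally produces a difference anchored at $F_{(\theta,\beta)}b$ rather than at the fixed set. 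Your two-sided sandwich $O^{\varepsilon-\delta'}(a')\subset O^{\varepsilon}(b_n')\subset O^{\varepsilon+\delta'}(a')$ expresses the whole discrepancy through the fixed set $a'$ and thus repairs this, but the price is continuity from below, i.e.\ the fact that the level set $\{x: e(x,a')=\varepsilon\}$ is Lebesgue-null; your Lipschitz/density-point justification of that fact is standard and correct, whereas the paper's one-sided estimate needs only continuity from above, which is plain monotone continuity of the measure on nested closed sets. In short: same geometry, but your limit interchange is by domination rather than by uniform estimates on a compact part plus a tail, and your volume comparison is tighter at the cost of one extra, correctly handled, measure-theoretic lemma.
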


\begin{proof}

We consider shape $A\in\mathbb{S}$. For the $\alpha>0$ we would like to find $\delta>0$ such that
\begin{equation}
d(A,B) < \delta \Rightarrow l(\Phi(A), \Phi(B))<\alpha.
\end{equation}

Note, that $\forall a\in r(A)$, $\ptb(a)=vol(O^{\varepsilon}(F_{(\theta,\beta)}a))-1$.

First we choose $\beta_0$ such that $\forall a\in r(A), b\in r(B)$
$$
\int\limits_{[-\frac{\pi}{2}, \frac{\pi}{2}]\times [\beta_0,\infty]}(\ptb(a) -\ptb(b))^2 \ e^{-\kappa\beta} d\beta \ d\theta \leq
$$
\begin{equation}
\leq \int\limits_{[-\frac{\pi}{2}, \frac{\pi}{2}]\times [\beta_0,\infty]} (C \beta \varepsilon)^2 e^{-\kappa\beta} d\beta \ d\theta< \alpha^2/2,
\end{equation}
where $C$ depends only on diameter of $a$.

By definition there exists $a,b$:
\begin{equation}
a \subset O^{\delta}(b),~b \subset O^{\delta}(a),
\label{eq:defdist}
\end{equation}
where $a\in r(A),b\in r (B)$. 
Equation~(\ref{eq:defdist}) implies that $\forall \theta,\beta$:

\begin{equation}\label{fb}
F_{(\theta,\beta)}(a) \subset O^{\beta \delta}\left( F_{(\theta,\beta)}(b)\right),~~
F_{(\theta,\beta)}(b) \subset O^{\beta \delta}\left( F_{(\theta,\beta)}(a)\right).
\end{equation}

Using (\ref{fb}) we choose $\delta(\theta,\beta)>0$ such that $\forall \delta<\delta(\theta,\beta),\ \beta<\beta_0,$
\begin{eqnarray}
\left|\ptb(b) - \ptb(a) \right|^2 & =& \left| vol(O^{\varepsilon}(F_{(\theta,\beta)}b)) -  vol(O^{\varepsilon}(F_{(\theta,\beta)}a))  \right|^2 \nonumber\\
 & \leq & \left( vol(O^{\varepsilon+\beta\delta}(F_{(\theta,\beta)}a)) - vol(O^{\varepsilon}(F_{(\theta,\beta)}a))\right)^2 \nonumber\\
 & \leq  &  \alpha^2/2w,
\end{eqnarray}
where $w=\int_{[-\frac{\pi}{2}, \frac{\pi}{2}]\times [1,\beta_0]} e^{-\kappa\beta} d\beta \ d\theta$.
It is easy to verify that $\delta(\theta,\beta)$ is a continuous function over compact set $[-\frac{\pi}{2}, \frac{\pi}{2}]\times [1,\beta_0]$.
Let $\min \delta(\theta,\beta)=\delta_0>0$ on this set.

  Finally, for $\delta<\delta_0$:

$$
l(\Phi(A),\Phi(B))^2 \leq \int (\ptb(a) -\ptb(b))^2 \ e^{-\kappa\beta} d\beta \ d\theta \leq
$$

\begin{equation}
\leq  \int_{[-\frac{\pi}{2}, \frac{\pi}{2}]\times [1,\beta_0]} (\ldots) e^{-\kappa\beta} d\beta \ d\theta
 +  \int_{[-\frac{\pi}{2}, \frac{\pi}{2}]\times [\beta_0,\infty]} (\ldots) e^{-\kappa\beta} d\beta \ d\theta\\
\leq  w \alpha^2/2w  +  \alpha^2/2.
\end{equation}

And so
\begin{equation}
 l(\Phi(A),\Phi(B))<\alpha.
\end{equation}
\end{proof}

\section{Implementation and results}
\label{sec:result}

\subsection{Discretization}
In practice, to compute the distance between two shapes, we have to discretize equation~\ref{eq:themetric}.
When analysing the surfaces representing
the function $P^n(F_{(\theta,\beta)})$ on figures~\ref{fig:calf2} and~\ref{fig:birdfight1}, it is clear that the embeded information is redundant.
Indeed, the surfaces are very smooth, so that we can employ a drastic discretization scheme, without loosing information.

Before computing the proposed feature, we normalize the shapes to V-area in order to satisfy the scale invariance (notice that our descriptor
is invariant with respect to isometries). Indeed,
for $\{a,b\} \in S \times S$ of the same shape but with different volume, we
have to choose different $n_a$ and $n_b$ (see (\ref{n})) in order to obtain $P^{n_a}(a)=
P^{n_b}(b)$. That is why we "normalize"  every set to some fixed area $V>0$ by the corresponding
homothety.

We consider four directions, $\theta \in \{-\frac{\pi}{4}, 0 , \frac{\pi}{4}, \pi \}$, and two expanding coefficients $\beta \in \{ 3, 5 \}$.


\subsection{MPEG-7 CE Shape-1 Part-B data set}
We first evaluate the proposed approach of the MPEG-7 CE Shape-1 Part-B data set (see \cite{kimia}), composed of 7 classes, containing each 20 shapes.
Although the classes are quite distinct, this data set contains important within-class variations (see figure~\ref{fig:shape4}).

We consider four directions, $\theta \in \{-\frac{\pi}{4}, 0 , \frac{\pi}{4}, \pi \}$, and two expanding coefficients $\beta \in \{ 3, 5 \}$.
The coefficient defined the metric is $\kappa = \frac{1}{5}$.

\begin{figure}
  \center
  \includegraphics[height=5cm]{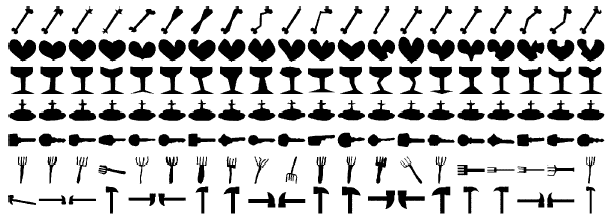}
  \caption{The MPEG-7 CE Shape-1 Part-B data set}\label{fig:shape4}
\end{figure}

We consider the proposed metric between each pair of shapes (except itself of course, cause distance is $0$) and report in Table~\ref{tab:shape4} the percentage of correct $n^{th}$ neighbors for
each class. The total correct answers correspond to $98\%$ for the first neighbors and $95\%$ for the second neighbors. If we consider the tenth neighbors,
we still obtain a total score of $85\%$ of good retrieval. This shows the robustness of the proposed metric.
\begin{table}
\begin{center}
\begin{tabular}{|c|c|c|c|c|c|c|c|c|c|c|}
\hline
& 1st & 2nd & 3rd  & 4th & 5th & 6th & 7th & 8th & 9th & 10th \\
\hline
Bonefull & 95 & 70 & 85 & 85 & 90 & 85 & 85 & 85 & 70 & 75\\
\hline
Heart & 100 & 100 & 100 & 100 & 100 & 100 & 95 & 95 & 95 & 100 \\
\hline
Glas & 100 & 100 & 100 & 100 & 100 & 100 & 100 & 90 & 100 & 95 \\
\hline
Fountain & 100 & 100 & 100 & 100 & 100 & 100 & 100 & 100 & 100 & 100\\
\hline
Key & 100 & 100 & 95  & 100 & 95 & 95 & 90 & 90 & 95 & 95 \\
\hline
Fork & 95 & 90 & 65 & 70 & 65 & 75 & 65 & 75 & 70 & 60 \\
\hline
Hammerfull & 95 & 95 & 80 & 30 & 30 & 35 & 30 & 40 & 35 & 15 \\
\hline
\end{tabular}
\caption{Retrieval scores on the MPEG-7 database} \label{tab:shape4}
\end{center}
\end{table}

\subsection{Kimia database}
We now consider a database defined by Kimia. It consists in 676 shapes divided into 27 classes (see additional material). The global retrieval
score is $94 \%$ for the first neighbor, $75 \%$ for the second neighbor, $69 \%$ for the third neighbor, $70\%$ for the fourth neighbor
and $67\%$ for the fifth neighbor. The results obtained for each class are summarized in table~\ref{tab:kimia}. They show the robustness
of the proposed metric in case of a huge database. However, notice that we do not model the objects themselves but only consider
shape descriptors without semantic interpretation. Therefore, the proposed metric is not adapted to occluded shapes.
On figure~\ref{fig:occluded}, the three first neighbors of a hand, occluded by a bar, are elephants. This result is natural considering
the number and the size of growths and their angle distribution.

\begin{table}
\begin{center}
\begin{tabular}{|c|c|c|c|c|c|c|c|c|c|c|c|c|c|c|}
\hline
Class & 1 & 2 & 3 & 4 & 5 & 6 & 7 & 8 & 9 & 10 & 11 & 12 & 13 & 14 \\
\hline
$1^{st}$ & 88 & 85 & 91 & 90 & 91 & 100 & 100 & 95 & 83 & 83 & 71 & 70 & 96 & 75\\
\hline
$2^{nd}$. & 53 & 70 & 88 & 90 & 100 & 100 & 100 & 100 & 83 & 71 & 63 & 63 & 87 & 47 \\
\hline
$3^{st}$ & 29 & 75 & 91 & 90 & 75 & 100 & 100 & 100 & 50 & 46 & 55 & 44 & 96 & 32 \\
\hline
Class & 15 & 16 & 17 & 18 & 19 & 20 & 21 & 22 & 23 & 24 & 25 & 26 & 27 &  \\
\hline
$1^{st}$ & 71 & 84 & 70 & 95 & 95 & 85 & 76 & 90 & 80 & 70 & 77 & 100 & 80 &  \\
\hline
$2^{nd}$ & 59 & 72 & 59 & 85 & 95 & 95 & 66 & 95 & 45 & 40 & 69 & 100 & 78 &  \\
\hline
$3^{st}$ & 49 & 75 & 49 & 55 & 90 & 90 & 69 & 95 & 15 & 50 & 72 & 100 & 83 & \\
\hline
\end{tabular}
\caption{The Kimia database classes and the percentage of good neighbor retrieval} \label{tab:kimia}
\end{center}
\end{table}

\begin{figure}
  \center
 \includegraphics[height=2.2cm]{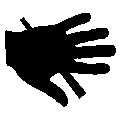}
 \includegraphics[height=2.2cm]{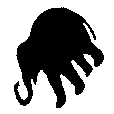}
 \includegraphics[height=2.2cm]{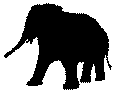}
 \includegraphics[height=2.2cm]{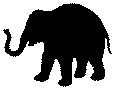}
  \caption{A hand shape occluded with a bar and its three first neighbors}\label{fig:occluded}
\end{figure}

\section{Conclusion}
\label{sec:conclusion}
We have proposed a new metric on a shape space based on the shape properties after applying family of transformations.
The proposed metric is well-defined and continuous. Retrieval results on two databases, one of them consisting of 676 shapes,
divided in 27 classes, have proven the relevance of this metric. We are currently studying the injectivity of the associated mapping. Further studies also include the definition of a shape classification algorithm,
based on this description.


\end{document}